\newtheorem{theorem}{Theorem}
\tikzset{
  vertex/.style={circle,draw,black,align=center,inner sep=0cm, minimum size=0.15cm,fill=black,anchor=center},
  line/.style={black},
  start-vertex/.style={vertex, minimum size=0.4cm,fill=white},
  goal-vertex/.style={vertex, minimum size=0.4cm,fill=white,rectangle},
}
\title{LaCAM: Search-Based Algorithm for Quick Multi-Agent Pathfinding}
\author{
  Keisuke Okumura
}
\begin{document}

\maketitle
\begin{abstract}
  We propose a novel complete algorithm for multi-agent pathfinding (MAPF) called \emph{lazy constraints addition search for MAPF (LaCAM)}.
  MAPF is a problem of finding collision-free paths for multiple agents on graphs and is the foundation of multi-robot coordination.
  LaCAM uses a two-level search to find solutions quickly, even with hundreds of agents or more.
  At the low-level, it searches constraints about agents' locations.
  At the high-level, it searches a sequence of all agents' locations, following the constraints specified by the low-level.
  Our exhaustive experiments reveal that LaCAM is comparable to or outperforms state-of-the-art sub-optimal MAPF algorithms in a variety of scenarios, regarding success rate, planning time, and solution quality of sum-of-costs.
\end{abstract}

\section{Introduction}

The \emph{multi-agent pathfinding (MAPF)} problem~\cite{stern2019def} aims to assign collision-free paths on graphs to each agent, which is the foundation of multi-robot coordination.
In MAPF applications such as fleet operations in automated warehouses~\cite{wurman2008coordinating}, it is necessary to solve MAPF with thousands of agents in a real-time manner.

In general, the design of MAPF algorithms needs to consider a trade-off between quality and speed.
From the quality side, solving MAPF optimally is NP-hard in various criteria~\cite{yu2013structure}.
Although many effective optimal algorithms have been developed, it is still challenging to handle a few hundred of agents in real-time (i.e., short timeframes in this context) even with state-of-the-art methods~\cite{li2021pairwise,lam2022branch}.
From the speed side, sub-optimal algorithms can quickly solve large MAPF instances while compromising solution quality.
However, in real-time applications (i.e., scenarios with deadlines for planning time in this context) or in instances with massive agents such that optimal algorithms never handle, there is no choice but to use sub-optimal algorithms.
In addition, recently proposed frameworks can effectively refine the quality of known MAPF solutions~\cite{okumura2021iterative,li2021anytime}.
Consequently, the development of quick sub-optimal algorithms has practical value.

To this end, this paper proposes a novel algorithm called \emph{lazy constraints addition search for MAPF (LaCAM)}.
From the theoretical side, LaCAM is complete.
It returns a solution for solvable instances, otherwise reports the non-existence.
From the empirical side, we demonstrate that LaCAM can solve a variety of MAPF instances in a very short time, including instances with large maps (e.g., $1491\times 656$ four-connected grid), instances with massive agents (e.g., $10,000$), or dense situations.
For instance, LaCAM solved \emph{all} instances with 400 agents on a $32\times 32$ grid with 20\% obstacles from the MAPF benchmark~\cite{stern2019def}, with a median runtime of \SI{1}{\second}.
In contrast, baseline sub-optimal MAPF algorithms~\cite{silver2005cooperative,standley2010finding,okumura2022priority,li2021eecbs,li2022mapf} mostly failed to solve the instances with the timeout of \SI{30}{\second}.

LaCAM has an easily extensible structure, which comprises a two-level search.
At the high-level, it searches a sequence of configurations, where a configuration is a tuple of locations for all agents.
At the low-level, it searches constraints that specify which agents go where in the next configuration.
Successors at the high-level (i.e., configurations) are generated in a lazy manner while following constraints from the low-level, leading to a dramatic reduction of the search effort.
Similar to the popular CBS algorithm~\cite{sharon2015conflict}, due to its simplicity, we consider that LaCAM can apply to many domains not limited to MAPF such as multi-robot motion planning~\cite{solis2021representation}.

Throughout the paper, we focus on sub-optimal LaCAM.
The discussion of optimal LaCAM appears at the end, together with a qualitative discussion of why LaCAM is quick.
In what follows, we present problem formulation, the algorithm, evaluation, and related work in order.
The supplementary material is available on \url{https://kei18.github.io/lacam}.

{
  \begin{figure*}
    \includegraphics[width=1\linewidth]{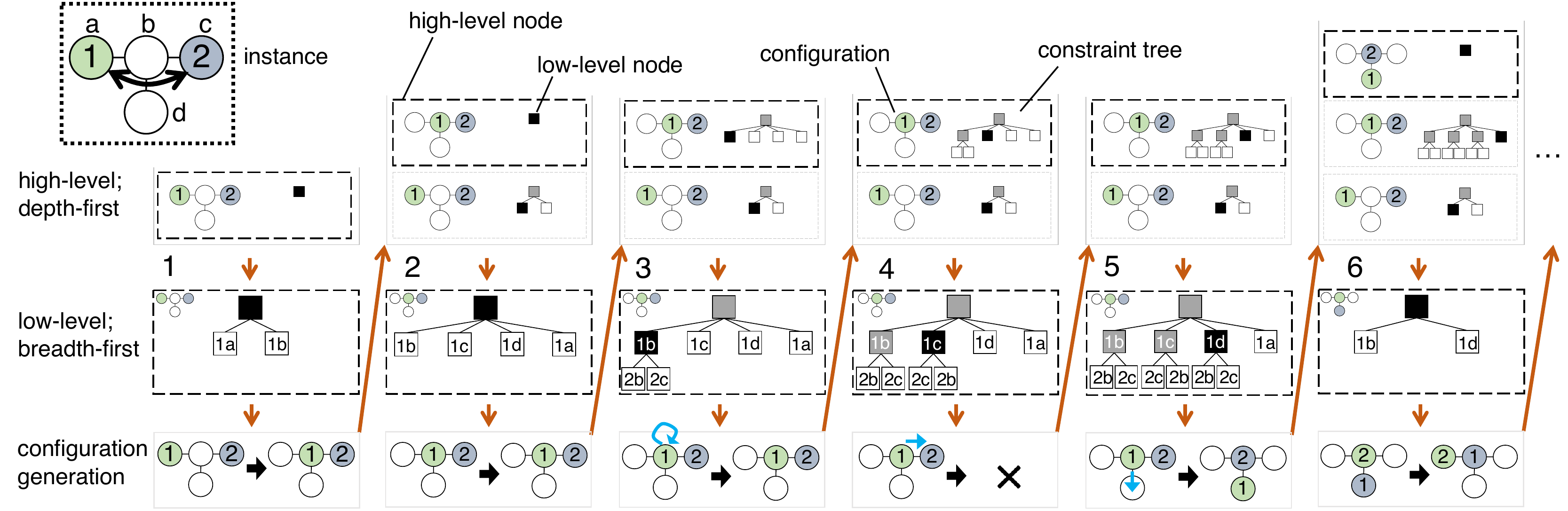}
    \caption{
      \textbf{Running example of LaCAM.}
      Orange arrows represent the search progress order.
      Selected and searched low-level nodes are filled with black and gray, respectively.
      Constraints are shown by blue-colored arrows.
    }
    \label{fig:algo}
  \end{figure*}
}

\section{Preliminaries}
\label{sec:preliminaries}

\paragraph{Problem Definition}
An \emph{MAPF instance} is defined by a graph $G = (V, E)$, a set of agents $A = \{1, \ldots, n\}$, a tuple of distinct starts $\mathcal{S} = (s_1, \ldots, s_n)$ and goals $\mathcal{G} = (g_1, \ldots, g_n)$, where $s_i, g_i \in V$.

Given an MAPF instance, let $\loc{i}{t} \in V$ denote the location of an agent $i$ at discrete time~$t \in \mathbb{N}_{\geq 0}$.
At each timestep~$t$, $i$ can move to an adjacent vertex, or can stay at its current vertex, i.e., $\loc{i}{t+1} \in \neigh{\loc{i}{t}} \cup \{ \loc{i}{t} \}$, where \neigh{v} is the set of vertices adjacent to $v \in V$.
Agents must avoid two types of conflicts:
1)~\emph{vertex conflict}: $\loc{i}{t} = \loc{j}{t}$, and,
2)~\emph{swap conflict}: $\loc{i}{t} = \loc{j}{t+1} \land \loc{i}{t+1} = \loc{j}{t}$.
A set of paths is \emph{conflict-free} when no conflicts exist.

A \emph{solution} to MAPF is a set of conflict-free paths $\{ \path{1}, \ldots, \path{n} \}$ such that all agents reach their goals at a certain timestep~$T \in \mathbb{N}_{\geq 0}$.
More precisely, the problem assigns a path $\path{i} = (\loc{i}{0}, \loc{i}{1}, \dots, \loc{i}{T})$ to each agent such that $\loc{i}{0} = s_i$, $\loc{i}{T}=g_i$, and is conflict-free.

This paper considers a \emph{sum-of-costs (SOC)} metric to rate solution quality: $\sum_{i \in A} T_i$, where $T_i \leq T$ is the earliest timestep such that $\loc{i}{T_i} = \loc{i}{T_i+1} = \ldots = \loc{i}{T} = g_i$.

\paragraph{Notations}
We use $S[k]$ to denote the $k$-th element of the sequence $S$, where the index starts at one.
$\Delta$ is the maximum degree of the graph.
A \emph{configuration} refers to a tuple of locations for all agents, e.g., the start and goal configurations are $\mathcal{S}$ and $\mathcal{G}$, respectively.
Two configurations $X$ and $Y$ are \emph{connected} when $\{ (X[1], Y[1]), \ldots, (X[n], Y[n]) \}$ constitutes a set of conflict-free paths.
For convenience, we use $\bot$ as an ``undefined'' or ``not found'' sign.

\section{Algorithm}
\label{sec:algorithm}
This section first presents the concept of LaCAM, followed by the pseudocode and implementation details.

\subsection{Concept}
LaCAM is a two-level search.
At the high-level, it explores a sequence of configurations; each search node corresponds to one configuration.
For each high-level node, it also performs a low-level search that creates \emph{constraints}.
A constraint specifies which agent is where in the next configuration.
The low-level search proceeds lazily, creating a minimal successor each time the corresponding high-level node is invoked.
\Cref{fig:algo} presents an illustration of LaCAM.
The example MAPF instance is depicted on the left upper side.
The following part explains the figure step by step.

\paragraph{High-Level Search}
As in general search schemes, LaCAM progresses by updating an \OPEN list that stores the high-level nodes.
\OPEN is implemented by data structures of stack, queue, or priority queue.
We use the stack throughout the paper.
Thus, LaCAM is explained as a depth-first search style.
The first row of \cref{fig:algo} shows \OPEN.
For each search iteration, LaCAM selects one node from \OPEN.
Different from general search schemes, LaCAM does not immediately discard the selected node, as explained after three paragraphs.

\paragraph{Low-Level Search}
Each high-level node comprises a configuration and a \emph{constraint tree}.
The constraint tree gradually grows each time invoking the high-level node; this is the low-level search of LaCAM.
Throughout the paper, we use a breadth-first search for the low-level.
The middle row of \cref{fig:algo} visualizes this step.
Each node of the constraint tree has a constraint, except for the root node.
For instance, in the first column, the root node has two successors: `1a' and `1b.'
This means that agent-1 must go to vertex-a or vertex-b in the next configuration.
Successors of the low-level search are created by two steps:
1)~Select an agent $i$. Let $v$ be the vertex of $i$ in the configuration.
2)~Create successors that specifies $i$ is on $u \in \neigh{v}$ or $v$.
The agent is selected so that each path from each low-level node to the root does not contain duplicated agents.
Therefore, those paths specify constraints for several agents.
In addition, no successors are created when the depth of the node is beyond $|A|$ because constraints have been assigned for all agents.

\paragraph{Configuration Generation}
Once both the high- and low-level nodes are specified, a new configuration is generated.
The new one must satisfy the constraints of the low-level node, which are specified by a path to the root node.
Excluding that, any connected configuration from the original configuration can be generated.
\emph{We complement how to generate new configurations following constraints later in \cref{sec:implementation}, but for now, regard this as a black-box function.}
The generation step is visualized in the third row of \cref{fig:algo}.
According to the new configuration, a new high-level node is created.
For instance, at the end of the first column of \cref{fig:algo}, a new configuration $(b, c)$ is generated and inserted to \OPEN.

\paragraph{Discarding High-Level Nodes}
When finished searching all low-level nodes, the corresponding high-level node has been generating all configurations connected to its configuration.
Therefore, this high-level node is discarded.

\paragraph{Example}
LaCAM continues the above search operations until finding the goal configuration $\mathcal{G}$.
Once $\mathcal{G}$ is found, it is trivial to obtain a solution by backtracking high-level nodes.
We next describe \cref{fig:algo} in detail step by step of columns.
\begin{enumerate}
\item Initially, \OPEN contains only a high-level node for the start configuration $\mathcal{S}$.
  At the low-level, two successors are created.
  In this step, any agent can be selected; we choose agent-1.
  Next, LaCAM generates a configuration connected to the original one $(a, c)$.
  Since the target low-level node is the root, there is no constraint.
  Assume that a new configuration $(b, c)$ is generated.
  Then, the corresponding new high-level node is inserted into \OPEN.
\item The high-level node generated in the previous iteration is selected.
  At the low-level, we again choose agent-1.
  This time, LaCAM generates four successors: `1a', `1b,' `1c,' and `1d.'
  We add them to the low-level tree in order of `1b,' `1c,' `1d,' and `1a' to make the example interesting.
  Assume that the same configuration $(b, c)$ is generated.
  Then, a high-level node is not created since the generated configuration has already appeared in the search.
\item The same high-level node is selected as the previous iteration.
  The low-level search generates two nodes for agent-2: `2b' and `2c.'
  This time, the configuration generation must follow the constraint of `1b.'
  Consequently, the same configuration $(b, c)$ is generated and no new high-level node is created.
\item According to the selected low-level node, it is impossible to generate a connected configuration due to conflicts;
  this iteration skips the creation of a high-level node.
\item The constraint makes agent-1 move to vertex-d.
  Then, a new configuration $(d, b)$ is generated.
  The corresponding high-level node is created and inserted into \OPEN.
\item The high-level node for $(d, b)$ is selected, and then, a new configuration $(b, a)$ is generated.
  The search can find the goal configuration in the next iteration.
\end{enumerate}

\subsection{Pseudocode}
\Cref{algo:planner} shows an example implementation of LaCAM.
In the pseudocode, \N and \C correspond to high- and low-level nodes, respectively.
The low-level search uses queue (\tree) because it is breadth-first.
Several details are below.

\paragraph{Configuration Generation}
This is performed by a blackbox function \funcname{get\_new\_config} [\cref{algo:planner:new-config}].
The function returns a configuration connected to a configuration of a high-level node, following constraints specified by a low-level node.
It returns $\bot$ when failing to generate such configurations (e.g., the fourth column of \cref{fig:algo}).
Note that, at the bottom of the low-level tree, all agents have constraints.
Therefore, exactly one configuration is specified without freedom.

\paragraph{High-Level Node Management}
To manage already known configurations, \cref{algo:planner} uses an \EXPLORED table that takes a configuration as a key and stores a high-level node.

\paragraph{Low-Level Agent Selection}
To generate low-level search trees, a high-level node includes \order, an enumeration of all agents sorted by specific criteria, specified by two functions \funcname{get\_init\_order} [\cref{algo:planner:init-node}] and \funcname{get\_order} [\cref{algo:planner:successor}].
The agent is selected following \order and depth of the low-level search tree (starting at one; obtained by a function \funcname{depth}) [\cref{algo:planner:selection}].
Doing so ensures that each path of the constraint tree has no duplicate agents.

{
  \newcommand{\config}{\m{\mathit{config}}}
  \newcommand{\depth}{\m{\mathit{depth}}}
  \newcommand{\parent}{\m{\mathit{parent}}}
  \newcommand{\who}{\m{\mathit{who}}}
  \newcommand{\where}{\m{\mathit{where}}}
  \begin{algorithm}[t!]
    \caption{LaCAM}
    \label{algo:planner}
    \begin{algorithmic}[1]
    \item[\textbf{input}:~MAPF instance ($\mathcal{S}$: starts, $\mathcal{G}$: goals)]
    \item[\textbf{output}:~solution or \texttt{NO\_SOLUTION}]
    \item[\textbf{preface}:~
      $\C\sub{init} \defeq \langle~\parent: \bot, \who: \bot, \where: \bot~\rangle$]
      \State initialize \OPEN, \EXPLORED
      \Comment{\OPEN: stack}
      \State $\N\sub{init} \leftarrow \begin{aligned}[t]\langle~
        &\config: \mathcal{S}, \tree: \llbracket~\C\sub{init}~\rrbracket,
        \hspace{0.8cm}\triangleright~\tree: \text{queue}\\
        &\order: \funcname{get\_init\_order}(), \parent: \bot~\rangle
      \end{aligned}$
      \label{algo:planner:init-node}
      \State $\OPEN.\funcname{push}(\N\sub{init})$;~~$\EXPLORED[\mathcal{S}] = \N\sub{init}$
      \smallskip
      \While{$\OPEN \neq \emptyset$}
      \State $\N \leftarrow \OPEN.\funcname{top}()$
      \IfSingle{$\N.\config = \mathcal{G}$}{\Return $\funcname{backtrack}(\N)$}
      \IfSingle{$\N.\tree = \emptyset$}{$\OPEN.\funcname{pop}()$;~\Continue}
      \State $\C \leftarrow \N.\tree.\funcname{pop}()$
      \smallskip
      \If{$\funcname{depth}(\C) \leq |A|$}
      \State $i \leftarrow \N.\order[\funcname{depth}(\C)]$;\;\;$v \leftarrow \N.\config[i]$
      \label{algo:planner:selection}
      \For{$u \in \neigh{v} \cup \{ v \}$}
      \State $\C\sub{new} \leftarrow \langle~\parent: \C, \who: i, \where: u~\rangle$
      \State $\N.\tree.\funcname{push}(\C\sub{new})$
      \EndFor
      \EndIf
      \State $\Q\sub{new} \leftarrow \funcname{get\_new\_config}(\N, \C)$
      \label{algo:planner:new-config}
      \IfSingle{$\Q\sub{new} = \bot$}{\Continue}
      \IfSingle{$\EXPLORED[\Q\sub{new}] \neq \bot$}{\Continue}
      \label{algo:planner:closed}
      \State $\N\sub{new} \leftarrow \begin{aligned}[t]\langle~
        &\config: \Q\sub{new}, \tree: \llbracket~\C\sub{init}~\rrbracket,\\
        &\order: \funcname{get\_order}(\Q\sub{new}, \N), \parent: \N~\rangle\end{aligned}$
        \label{algo:planner:successor}
      \State $\OPEN.\funcname{push}(\N\sub{new})$;~~$\EXPLORED[\Q\sub{new}] = \N\sub{new}$
      \EndWhile
      \State \Return \texttt{NO\_SOLUTION}
    \end{algorithmic}
  \end{algorithm}
}

\begin{theorem}[completeness]
  \Cref{algo:planner} returns a solution for solvable MAPF instances; otherwise, it reports \texttt{NO\_SOLUTION}.
\end{theorem}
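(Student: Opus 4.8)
The plan is to reduce the claim to the soundness and completeness of a search over a finite \emph{configuration graph} $\mathcal{C}$, whose vertices are all configurations and in which $X \to Y$ is an edge exactly when $Y$ is connected to $X$. Solvability of the instance is then equivalent to reachability of $\mathcal{G}$ from $\mathcal{S}$ in $\mathcal{C}$, so it suffices to show that \Cref{algo:planner} realizes a sound, complete graph search on $\mathcal{C}$ from $\mathcal{S}$. Soundness is immediate: \funcname{get\_new\_config} only returns configurations connected to the configuration of the invoking high-level node, and the initial node holds $\mathcal{S}$; hence following parent pointers with \funcname{backtrack} from a node whose configuration equals $\mathcal{G}$ produces a sequence $\mathcal{S} = X_0, X_1, \ldots, X_k = \mathcal{G}$ in which every $X_{t+1}$ is connected to $X_t$, and reading off the per-agent transitions gives conflict-free paths from the starts to the goals, i.e., a valid solution.

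The crux is a \emph{successor-completeness} property: once the high-level node $\N$ for a configuration $X$ has an empty \tree, every configuration connected to $X$ already resides in \EXPLORED. I would establish this from the structure of the low-level search. The children of each constraint-tree node are enqueued \emph{before} \funcname{get\_new\_config} is consulted, so a $\bot$ result never prunes the subtree; since \tree is a queue, the breadth-first growth reaches every fully-constrained node, at which each agent $i$ has been pinned to some $u \in \neigh{X[i]} \cup \{X[i]\}$ (the agent ordering fixes a distinct agent at each level, so a full root-to-leaf path constrains all $|A|$ agents exactly once), and these nodes range over all such combined assignments. For any $Y$ connected to $X$ there is exactly one matching node, where \funcname{get\_new\_config} is forced to return $Y$ (non-connected assignments yield $\bot$). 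The \EXPLORED guard on \cref{algo:planner:closed} suppresses only repeat insertions, never a first occurrence, so $Y$ is recorded the first time it is produced, whether at that node or earlier via an interior-node generation.

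Completeness then follows by combining finiteness with this property. There are at most $|V|^{n}$ configurations, \EXPLORED admits each into \OPEN at most once, and every constraint tree is finite (branching at most $\Delta + 1$, depth at most $|A| + 1$); hence the main loop runs finitely often and the algorithm always terminates. Crucially, a high-level node is removed from the stack \OPEN only once its \tree is empty, so no configuration is discarded before all of its $\mathcal{C}$-successors have been generated. An induction on the graph distance from $\mathcal{S}$ in $\mathcal{C}$, invoking successor-completeness at the inductive step, then shows that every configuration reachable from $\mathcal{S}$ is eventually entered into \EXPLORED. Consequently, if the instance is solvable, $\mathcal{G}$ is reachable and is encountered, which triggers the return of a solution; if it is unsolvable, $\mathcal{G}$ is never reachable, \OPEN is exhausted, and \texttt{NO\_SOLUTION} is returned.

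The step I expect to be the main obstacle is the successor-completeness property, as it must reconcile the deliberately lazy, black-box \funcname{get\_new\_config} --- which at interior constraint-tree nodes may return an \emph{arbitrary} connected configuration --- with the need for exhaustiveness. The argument has to rest on the fully-constrained leaves alone already covering every connected successor, so that the interior-node generations can only surface configurations sooner and can never cause one to be skipped; some care is also needed to confirm that the deepest constraint-tree nodes indeed constrain every agent, so that each such leaf pins down a unique configuration for \funcname{get\_new\_config} to test.
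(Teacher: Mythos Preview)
Your argument is correct and follows essentially the same approach as the paper's proof: finiteness of the high- and low-level search spaces, the key observation that exhausting a high-level node's constraint tree forces every connected configuration to have been generated (your ``successor-completeness''), and the resulting exhaustive exploration of all configurations reachable from $\mathcal{S}$. Your write-up is considerably more detailed than the paper's brief sketch---in particular, you make explicit the soundness direction, the role of the fully-constrained leaves in guaranteeing exhaustiveness despite the black-box generator, and the fact that a high-level node is only popped once its \tree\ is empty---but the underlying strategy is the same.
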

\begin{proof}
  A search space is finite.
  For the high-level, the number of configurations is $O\left(|V|^{|A|}\right)$.
  For the low-level, the number of configurations connected to one configuration is $O\left(\Delta^{|A|}\right)$.
  When the low-level search is finished, the corresponding high-level node has been generating all configurations connected to its configuration.
  Consequently, all \emph{reachable} configurations from the start configuration, defined by transitivity over connections of two configurations, are examined, deriving the theorem.
\end{proof}

{
  \setlength{\tabcolsep}{1.7pt}
  \renewcommand{\arraystretch}{0.8}
  \newcommand{\drawmap}[1]{
    \multicolumn{2}{c}{
      \begin{minipage}{0.13\linewidth}
        \centering
        \includegraphics[width=1.0\linewidth]{fig/raw/map/#1.pdf}
      \end{minipage}
    }
  }
  \begin{table*}[t!]
    \centering
    \begin{tabular}{lrrrrrrrrrrrrc}
      \toprule
      & \multicolumn{2}{c}{\mapname{tree}}
      & \multicolumn{2}{c}{\mapname{corners}}
      & \multicolumn{2}{c}{\mapname{tunnel}}
      & \multicolumn{2}{c}{\mapname{string}}
      & \multicolumn{2}{c}{\mapname{loop-chain}}
      & \multicolumn{2}{c}{\mapname{connector}}
      \\
      \cmidrule(lr){2-3}
      \cmidrule(lr){4-5}
      \cmidrule(lr){6-7}
      \cmidrule(lr){8-9}
      \cmidrule(lr){10-11}
      \cmidrule(lr){12-13}
      & \drawmap{tree}
      & \drawmap{corners}
      & \drawmap{tunnel}
      & \drawmap{string}
      & \drawmap{loop-chain}
      & \drawmap{connector}
      \\
      \cmidrule(lr){2-3}
      \cmidrule(lr){4-5}
      \cmidrule(lr){6-7}
      \cmidrule(lr){8-9}
      \cmidrule(lr){10-11}
      \cmidrule(lr){12-13}
      & Time(ms) & SOC
      & Time(ms) & SOC
      & Time(ms) & SOC
      & Time(ms) & SOC
      & Time(ms) & SOC
      & Time(ms) & SOC
      & Solved
      \\
      \cmidrule{1-1}
      \cmidrule(lr){2-3}
      \cmidrule(lr){4-5}
      \cmidrule(lr){6-7}
      \cmidrule(lr){8-9}
      \cmidrule(lr){10-11}
      \cmidrule(lr){12-13}
      \cmidrule(lr){14-14}
      LaCAM{\scriptsize(med)}
      & 0 & 19
      & 17 & 47
      & 173 & 190
      & 0 & 66
      & 34 & 1752
      & 0 & 168
      & \multirow{2}{*}{\textbf{6/6}}
      \\
      LaCAM{\scriptsize(worst)}
      & 0 & 41
      & 31 & 70
      & 208 & 254
      & 0 & 68
      & 124 & 2593
      & 3 & 281
      \\\cmidrule{1-14}
      PP
      & N/A & N/A
      & 0 & 32
      & N/A & N/A
      & N/A & N/A
      & N/A & N/A
      & N/A & N/A
      & 1/6
      \\
      OD
      & 0 & 31
      & 0 & 47
      & 30 & 191
      & 0 & 22
      & 5882 & 2269
      & 27 & 129
      & \textbf{6/6}
      \\
      PIBT
      & N/A & N/A
      & N/A & N/A
      & N/A & N/A
      & N/A & N/A
      & N/A & N/A
      & N/A & N/A
      & 0/6
      \\
      \pibtplus
      & 0 & 55
      & 0 & 54
      & 0 & 227
      & 0 & 52
      & N/A & N/A
      & 0 & 130
      & 5/6
      \\
      EECBS
      & 1 & 16
      & 1 & 32
      & N/A & N/A
      & 0 & 20
      & N/A & N/A
      & N/A & N/A
      & 3/6
      \\
      LNS2
      & N/A & N/A
      & 0 & 32
      & N/A & N/A
      & 0 & 20
      & N/A & N/A
      & 160 & 80
      & 3/6
      \\
      \cmidrule{1-14}
      \astar{\scriptsize(SOC-optimal)}
      & 0 & 16
      & 16 & 32
      & 24 & 53
      & 1 & 20
      & 17752 & 121
      & 391138 & 80
      & \textbf{6/6}
      \\
      \bottomrule
    \end{tabular}
    \caption{
      \textbf{Results of the small complicated instances.}
    }
    \label{table:small-complicated}
  \end{table*}
}

\subsection{Implementation Details}
\label{sec:implementation}

\paragraph{Configuration Generation}
The heart of LaCAM is how to generate configurations following constraints [\cref{algo:planner:new-config}].
Ideally, this sub-procedure should be sufficiently quick and generate a promising configuration to reach the goal configuration.
This can be realized by adapting existing MAPF algorithms that can compute a partial solution, i.e., a set of paths until a certain timestep.
Regarding the target configuration as a start configuration and producing a partial solution by these algorithms, we can extract a configuration at timestep one in the partial solution.
For instance, a naive approach is to adapt prioritized planning with a limited planning horizon, such as windowed \hca~\cite{silver2005cooperative} with single-step window size.
A rolling horizon approach for lifelong MAPF~\cite{li2021lifelong} is also available to generate configurations.
An aggressive approach is to adapt PIBT~\cite{okumura2022priority}, a scalable MAPF algorithm that repeats planning for one-timestep; we used PIBT in the experiments.
\emph{Those algorithms, originally developed to solve MAPF, are available to create promising successors in the high-level search of LaCAM.}

\paragraph{Order of Agents}
In our implementation, the agent order of the initial high-level node was in descending order of the distance between the start and goal [\cref{algo:planner:init-node}], influenced by commonly used heuristics of prioritized planning~\cite{van2005prioritized}.
In other high-level nodes [\cref{algo:planner:successor}], we prioritized agents who are not on their goal, aiming to create constraints for those agents earlier in the low-level search.
A tiebreak used the last arrival timestep of agents in the high-level search so that agents who have been apart from goals for a long time were prioritized, akin to PIBT~\cite{okumura2022priority}.

\paragraph{Order of Low-Level Nodes}
As seen in \cref{fig:algo}, the order of inserting low-level nodes affects search progress.
We provisionally made the order random.
This part requires further investigation in the future.

\paragraph{Reinsert High-Level Node}
\Cref{algo:planner} takes a naive depth-first search style.
Instead, when finding an already known configuration, we observed that reinserting the corresponding high-level node to \OPEN [\cref{algo:planner:closed}] can improve solution quality.
The reason is that repeatedly appearing configurations in the search  can be seen as a bottleneck;
it makes sense to advance the low-level search of the high-level node, which is performed by the reinsert operation.
Note that LaCAM does not lose completeness with this modification.

\section{Evaluation}
\label{sec:evaluation}
This section evaluates LaCAM.
Specifically, we show five empirical results:
1)~evaluation with small complicated MAPF instances,
2)~evaluation with the MAPF benchmark,
3)~showing the current limitation of LaCAM using an adversarial instance,
4)~scalability test with up to 10,000 agents, and
5)~investigating other implementation designs.

{
  \setlength{\tabcolsep}{0pt}
  \newcommand{\entry}[3]{
    \begin{minipage}{0.155\linewidth}
      \centering
      \begin{tabular}{ll}
        \begin{minipage}{0.65\linewidth}
          \baselineskip=7pt
          {\tiny\mapname{#1}}\\
          {\tiny #2 (#3)}
        \end{minipage} &
        \begin{minipage}{0.28\linewidth}
          \includegraphics[width=1\linewidth]{fig/raw/map/#1.pdf}
        \end{minipage}
      \end{tabular}
      \\
      \IfFileExists{fig/raw/success-rate_#1.pdf}
      {\includegraphics[width=1\linewidth]{fig/raw/success-rate_#1.pdf}}
      {\includegraphics[width=1\linewidth]{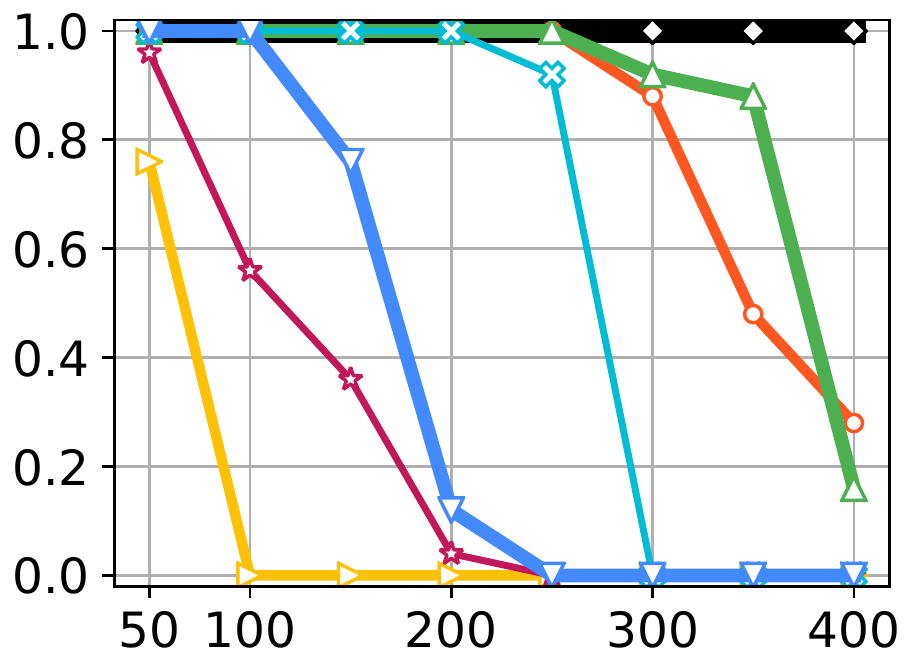}}
      \IfFileExists{fig/raw/runtime_#1.pdf}
      {\includegraphics[width=1\linewidth]{fig/raw/runtime_#1.pdf}}
      {\includegraphics[width=1\linewidth]{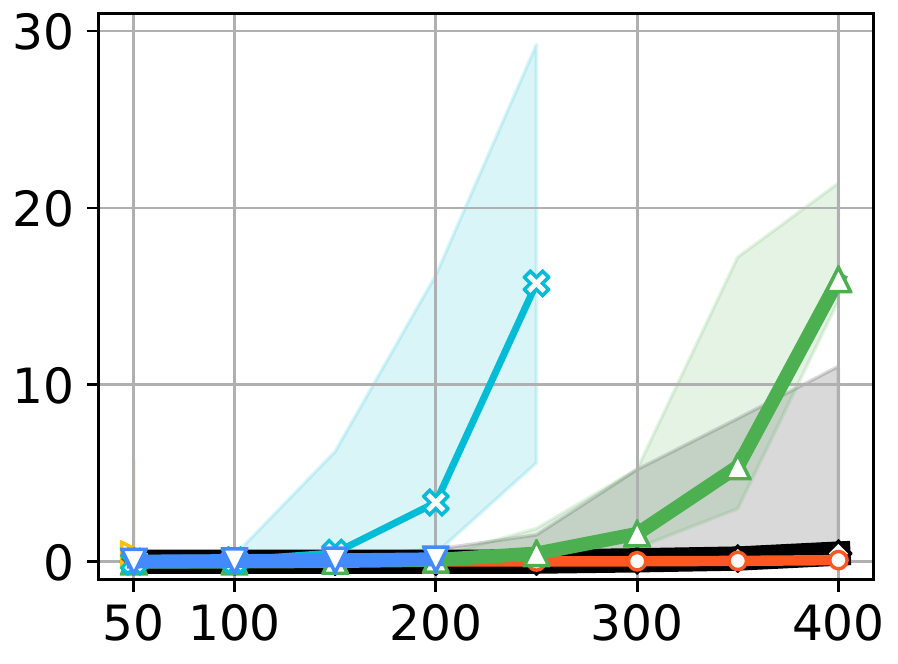}}
      \IfFileExists{fig/raw/sum-of-costs_#1.pdf}
      {\includegraphics[width=1\linewidth]{fig/raw/sum-of-costs_#1.pdf}}
      {\includegraphics[width=1\linewidth]{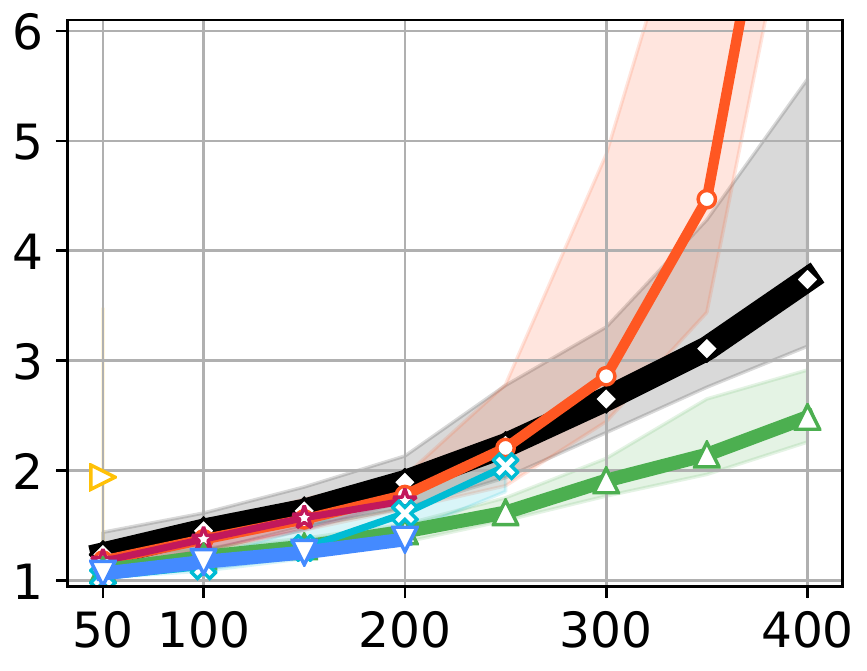}}
    \end{minipage}
  }
  \newcommand{\labels}{
    \begin{minipage}{0.03\linewidth}
      \begin{tikzpicture}
        \node[rotate=90](l1) at (0, 0.1) {cost};
        \node[rotate=90](l2) at (0, 2.1) {runtime (sec)};
        \node[rotate=90](l3) at (0, 4.2) {success rate};
        \node[] at (0, -1) {};
        \node[] at (0, 6) {};
      \end{tikzpicture}
    \end{minipage}
  }
    \begin{figure*}[th!]
    \centering
    \begin{tabular}{ccccccc}
      \labels &
      \entry{room-32-32-4}{32x32}{682} &
      \entry{room-64-64-16}{64x64}{3,646} &
      \entry{room-64-64-8}{64x64}{3,232} &
      \entry{random-32-32-20}{32x32}{819} &
      \entry{random-64-64-20}{64x64}{3,270} &
      \entry{maze-32-32-2}{32x32}{666}
      \smallskip\\
      \labels &
      \entry{warehouse-20-40-10-2-1}{321x123}{22,599} &
      \entry{warehouse-20-40-10-2-2}{340x164}{38,756} &
      \entry{ost003d}{194x194}{13,214} &
      \entry{lt-gallowstemplar-n}{251x180}{10,021} &
      \entry{brc202d}{530x481}{43,151} &
      \entry{orz900d}{1491x656}{96,603}
      \\
      \multicolumn{7}{c}{agents:~$|A|$}\smallskip\\
      \multicolumn{7}{c}{\includegraphics[width=0.65\linewidth]{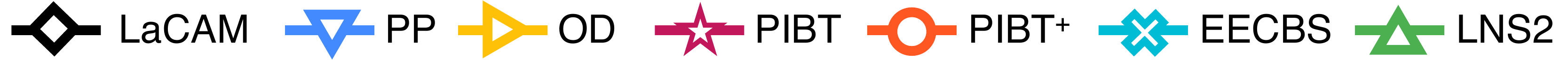}}
    \end{tabular}
    \caption{
      \textbf{Results of the MAPF benchmark.}
      The number of vertices for each grid is shown with parentheses.
      `cost' represents SOC divided by the total distance of start-goal pairs, $\sum_{i \in A}\dist{s_i}{g_i}$, where \dist{u}{v} is the shortest path length from $u \in V$ to $v \in V$ on the graph.
      This score works as the upper bound of sub-optimality, where the minimum is one.
      For `runtime' and `cost,' we show median scores of solved instances within each solver.
      We further show minimum and maximum scores using semi-transparent regions.
      The success rate of LaCAM was based on the number of successful trials over total trials.
    }
    \label{fig:result-main}
  \end{figure*}
}

\subsection{Experimental Setup}

\paragraph{Baselines}
We carefully selected the following six sub-optimal MAPF algorithms as baselines.
\begin{itemize}
\item \textbf{Prioritized Planning (PP)}~\cite{erdmann1987multiple,silver2005cooperative} as a basic approach for MAPF.
  PP used distance heuristics~\cite{van2005prioritized} for the planning order and \astar~\cite{hart1968formal} for single-agent pathfinding.
\item \textbf{\astar with operator decomposition (OD)}~\cite{standley2010finding} as an adaptation of the general search scheme to MAPF.
  We used a greedy search to obtain solutions as much as possible (i.e., neglecting g-value of \astar).
  The heuristic (i.e., h-value) was the sum of distance towards goals.
\item \textbf{PIBT}~\cite{okumura2022priority}, which repeats one-timestep planning to solve MAPF.
  We tested a vanilla PIBT because the LaCAM implementation used PIBT as a sub-procedure (\cref{sec:implementation}).
  To detect planning failure, we regarded that PIBT failed to solve an instance when it reached pre-defined sufficiently large timesteps.
\item \textbf{\pibtplus}~\cite{okumura2022priority} as a state-of-the-art scalable MAPF solver, which uses PIBT until a certain timestep.
  The rest of planning is performed by another MAPF algorithm.
  The complement phase used a rule-based solver, Push and Swap~\cite{luna2011push}.
\item \textbf{EECBS}~\cite{li2021eecbs} as a state-of-the-art search-based solver that bases on a celebrated MAPF algorithm, CBS~\cite{sharon2015conflict}.
  The sub-optimality was set to five to find solutions as much as possible.
  In \cref{sec:scalability}, it was set to the default value ($1.2$) of the authors' implementation due to the better performance.
\item \textbf{MAPF-LNS2 (LNS2)}~\cite{li2022mapf} as another excellent MAPF solver based on large neighborhood search.
\end{itemize}

It is worth mentioning that PP, PIBT$^{(+)}$, EECBS, and LNS2 are incomplete; they cannot detect unsolvable instances, unlike LaCAM.
For PIBT$^{(+)}$, EECBS, and LNS2, we used the implementations coded by their respective authors.%
\footnote{The codes are available on \url{https://github.com/{Kei18/pibt2, Jiaoyang-Li/EECBS, Jiaoyang-Li/MAPF-LNS2}}.}
For PP, we used an implementation included in~\cite{okumura2022priority}.
OD was own-coded in C++.

\paragraph{Evaluation Environment}
LaCAM was coded in C++, available in the online supplementary material.
The experiments were run on a desktop PC with Intel Core i9-7960X \SI{2.8}{\giga\hertz} CPU and \SI{64}{\giga\byte} RAM.
We performed a maximum of 32 different instances run in parallel using multi-threading.

\subsection{Small Complicated Instances}

First, we tested LaCAM with instances used in~\cite{luna2011push}, shown in \cref{table:small-complicated}.
The runtime limit was \SI{10}{\second}.
Since our LaCAM implementation used non-determinism (see \cref{sec:implementation}), it was run five times for the same instance while changing random seeds.

\cref{table:small-complicated} summarizes the results.
As reference records, we also show SOC-optimal solutions obtained by a vanilla \astar, ignoring the runtime limit.
Although most baseline methods failed several instances, LaCAM solved all the instances regardless of random seeds, within reasonable timeframes.
Regarding solution quality (i.e., SOC), LaCAM compromises the quality compared to PP, EECBS, and LNS2.
This is due to the nature of LaCAM, which progresses the search with a short planning horizon.

\subsection{MAPF Benchmark}
\label{sec:eval-mapf}

Next, we tested LaCAM using the MAPF benchmark~\cite{stern2019def}, which includes a set of four-connected grids and start–goal pairs for agents.
We selected twelve grids with different portfolios (e.g., size, sparseness, and complexity).
For each grid, 25 ``random scenarios'' were used while increasing the number of agents by $50$ up to the maximum.
Therefore, identical instances were tried for the solvers in all settings.
The runtime limit was set to \SI{30}{\second} following~\cite{stern2019def}.
LaCAM was run five times for each setting.
For reference, we report that \astar used in \cref{table:small-complicated} failed to solve an instance with ten agents in \emph{random-32-32-20}.

\Cref{fig:result-main} summarizes the results.
In most scenarios, LaCAM outperforms PP, OD, EECBS, and LNS2 in both success rate and runtime, while compromising the solution quality.
The runtime of LaCAM is comparable with PIBT$^{(+)}$, furthermore, LaCAM outperforms a vanilla PIBT in the success rate.
The most competitive results with LaCAM were scored by \pibtplus.
However, overall, the SOC scores of LaCAM are better than those of \pibtplus, especially in dense situations.
Furthermore, LaCAM solved challenging scenarios, such as \mapname{random-32-32-20} with 400 agents, where the baseline methods almost failed to solve.
In summary, LaCAM can solve many instances within short timeframes, with acceptable solution quality.
Meanwhile, we observed that LaCAM scored poor performance in several grids, such as \mapname{random-64-64-20} and \mapname{warehouse-20-40-10-2-1}.
We investigate this reason in the next.

\subsection{Adversarial Instance}
\label{sec:poor}

In the previous experiment, LaCAM quickly solved various scenarios but scored poor performance in several grids.
Specifically, LaCAM solved all instances of \mapname{warehouse-20-40-10-2-2} while it failed frequently in \mapname{warehouse-20-40-10-2-1}.
The two maps differ in the width of corridors: the former is two while the latter is one.
Therefore, we considered the existence of narrow corridors such that two agents cannot pass through could be a bottleneck of LaCAM.

For investigation, we prepared an adversarial instance for LaCAM (see \cref{table:dislike-example}), where two pairwise agents need to swap their locations in narrow corridors.
We counted the number of search iterations of the high-level search while changing the number of agents (two: only agents-$\{1, 2\}$ appear, four, and six).
LaCAM solved all instances, however, the search effort dramatically increases with more agents.
This is because, with more agents, the high-level search can contain a huge number of \emph{slightly different configurations}.
Those configurations differ only one or two agents differ in their locations and disturb the progression of the search.

From this observation, we consider the poor performance in several grids of \cref{fig:result-main} as follows.
The LaCAM implementation used a depth-first search style, therefore, once a configuration similar to \cref{table:dislike-example} appears during the search, resolving this configuration towards the goal configuration requires significant search effort.
Consequently, LaCAM reaches a timeout.
Overcoming this limitation is one promising future direction.
One resolution might be developing a better configuration generator other than PIBT.

{
  \setlength{\tabcolsep}{5pt}
  \begin{table}[tb!]
    \centering
    \begin{tabular}{crr}
      \toprule
      \multirow{4}{*}{
        \begin{minipage}{0.35\linewidth}
          \includegraphics[width=1\linewidth]{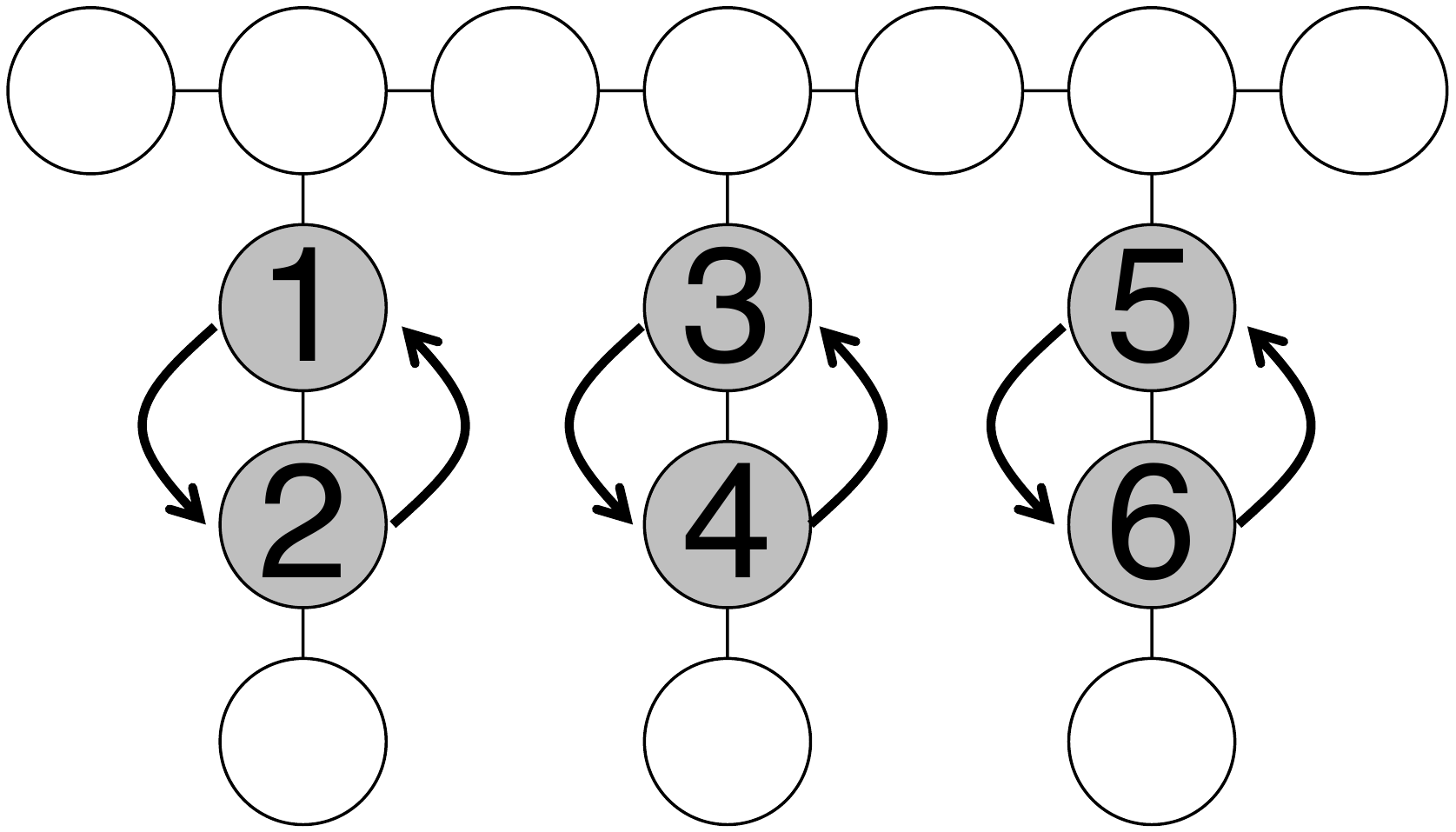}
        \end{minipage}
      }
      & $|A|$ & search iteration
      \\\cmidrule(lr){2-3}
      & 2 & 128
      \\
      & 4 & 23,907
      \\
      & 6 & 287,440
      \\\bottomrule
    \end{tabular}
    \caption{
      \textbf{LaCAM performance in an adversarial instance.}
    }
    \label{table:dislike-example}
  \end{table}
}

\subsection{Scalability Test}
\label{sec:scalability}
We evaluated the scalability of the number of agents, using instances with up to $10,000$ agents in \mapname{warehouse-20-40-10-2-2}.
The runtime limit was set to \SI{1000}{\second}.
OD was excluded since it run out of memory.
\Cref{fig:result-large} summarizes the result.
Only LaCAM solved all instances.
Furthermore, the runtime was in at most \SI{30}{\second} even with 10,000 agents, demonstrating the excellent scalability of LaCAM.%
\footnote{
We briefly report that LaCAM can be faster depending on computational environments.
As a pilot study, we tested the same setting with a single-thread run in a laptop with Intel Core i9 \SI{2.3}{\giga\hertz} CPU and \SI{16}{\giga\byte} RAM.
Even with 10,000 agents, LaCAM solved all instances at most in \SI{10}{\second} in the worst case.
}

{
  \setlength{\tabcolsep}{0pt}
  \begin{figure}[tb!]
    \centering
    \begin{tabular}{ccc}
      \begin{minipage}{0.32\linewidth}
        \includegraphics[width=1\linewidth]{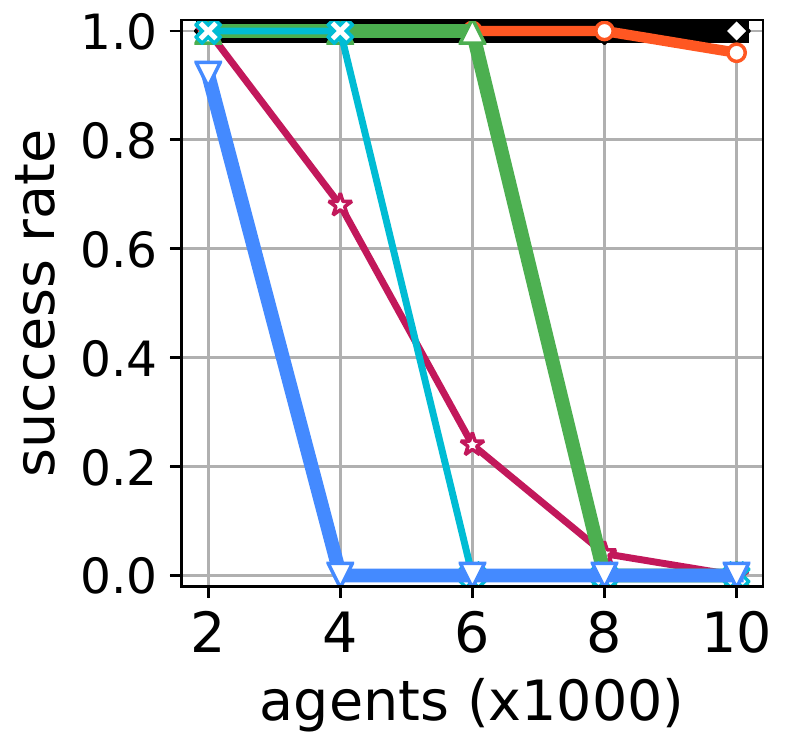}
      \end{minipage}
      &
      \begin{minipage}{0.32\linewidth}
        \includegraphics[width=1\linewidth]{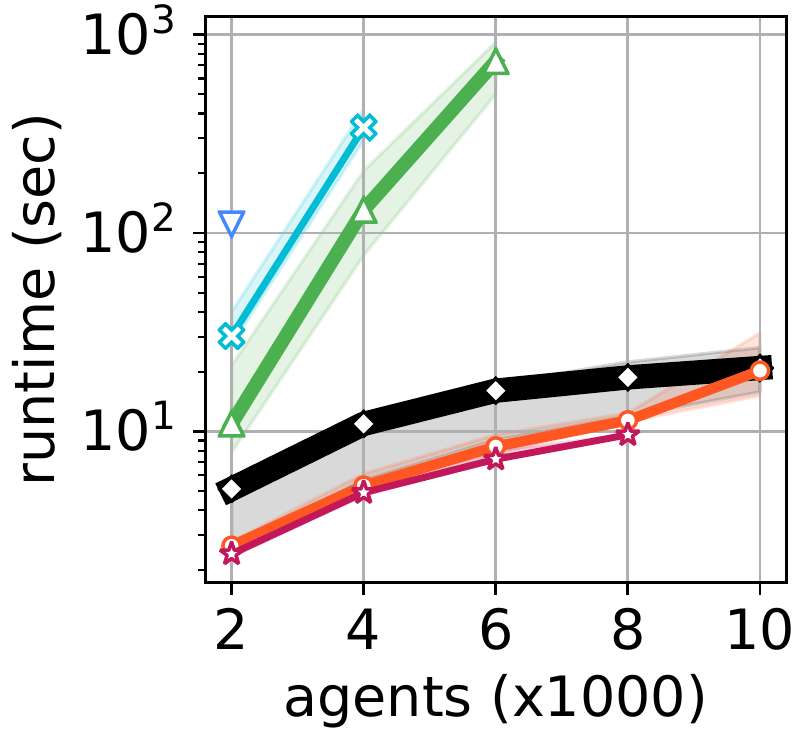}
      \end{minipage}
      &
      \begin{minipage}{0.32\linewidth}
        \includegraphics[width=1\linewidth]{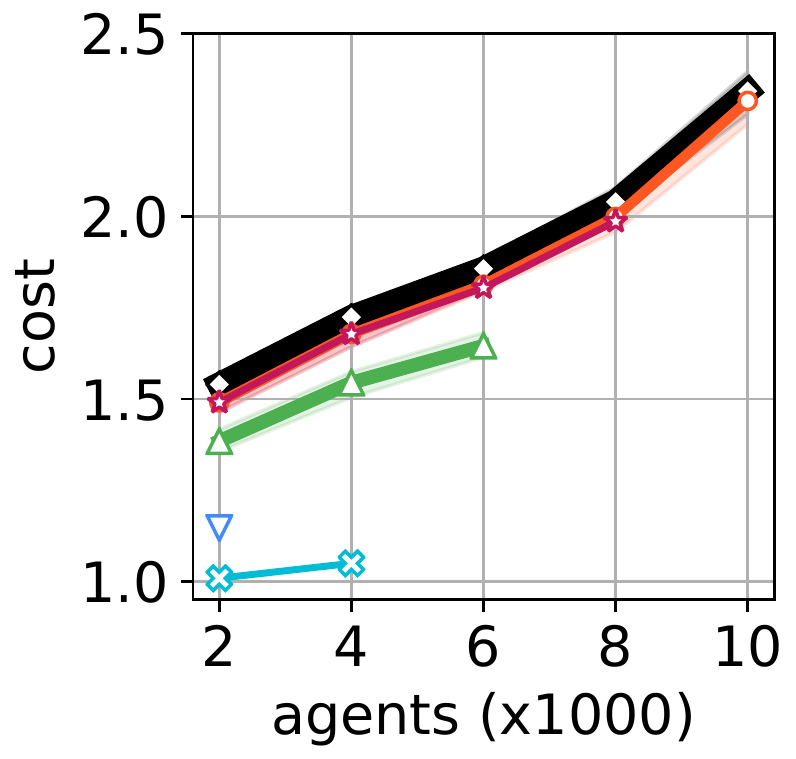}
      \end{minipage}
      \\
      \multicolumn{3}{c}{\includegraphics[width=0.9\linewidth]{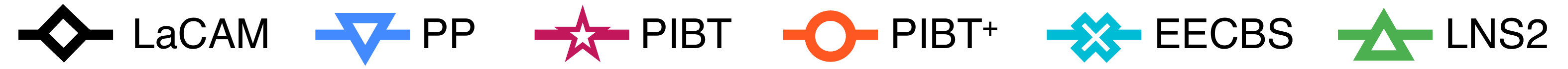}}
    \end{tabular}
    \caption{
      \textbf{Results with massive agents.}
      The used map was \mapname{warehouse-20-40-10-2-2}.
    }
    \label{fig:result-large}
  \end{figure}
}

{
  \setlength{\tabcolsep}{0pt}
  \begin{figure}[tb!]
    \centering
        \begin{tabular}{ccc}
      \begin{minipage}{0.32\linewidth}
        \includegraphics[width=1\linewidth]{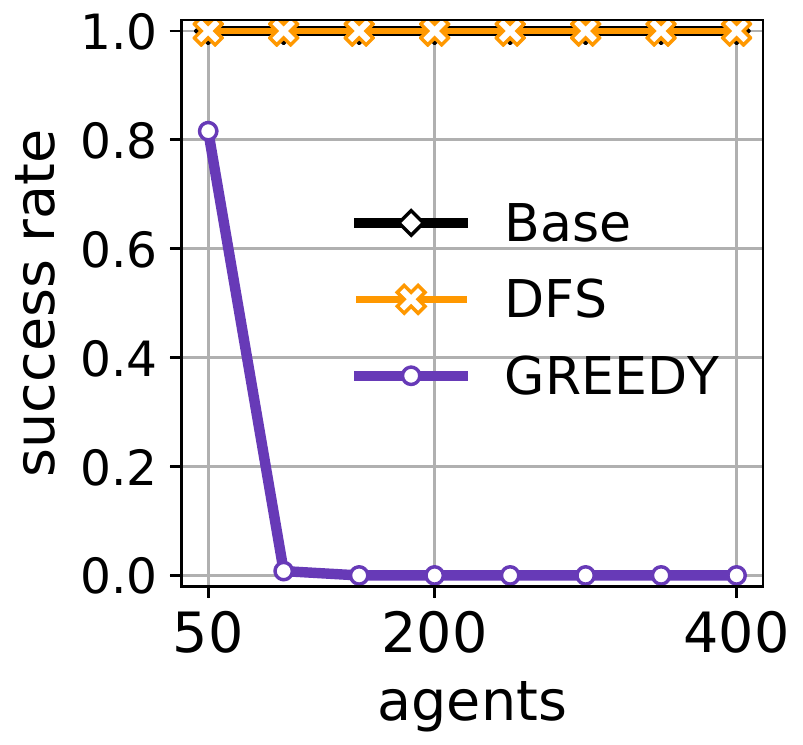}
      \end{minipage}
      &
      \begin{minipage}{0.34\linewidth}
        \includegraphics[width=1\linewidth]{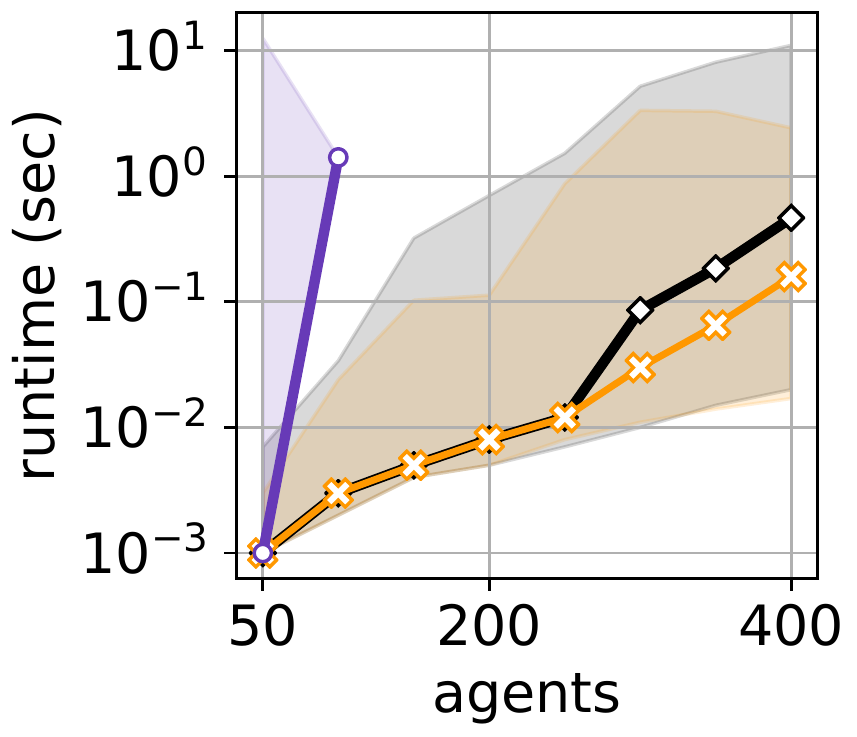}
      \end{minipage}
      &
      \begin{minipage}{0.32\linewidth}
        \includegraphics[width=1\linewidth]{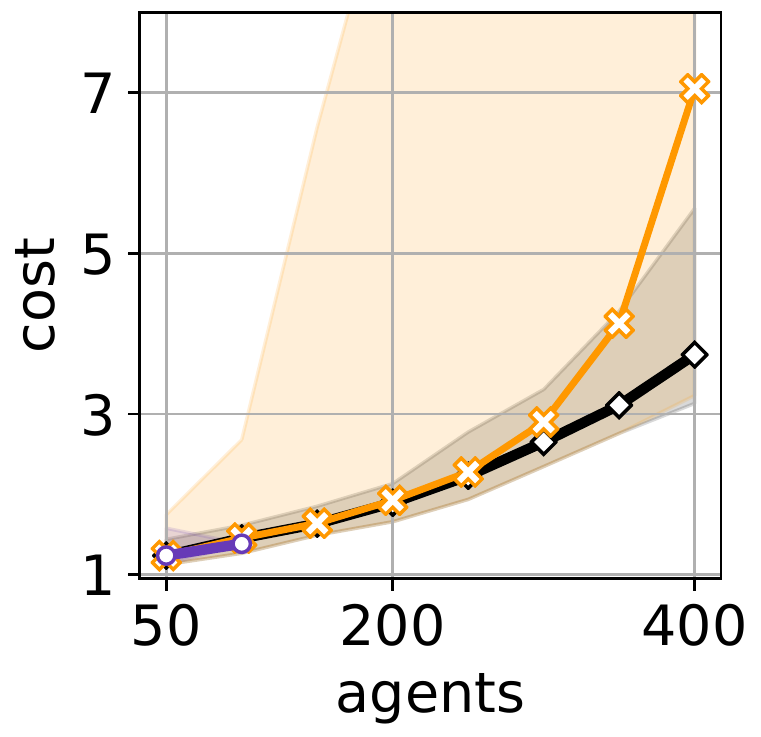}
      \end{minipage}
    \end{tabular}
    \caption{
      \textbf{Results with different LaCAM designs.}
      The used map was \mapname{random-32-32-20}.
    }
    \label{fig:result-design}
  \end{figure}
}

\subsection{Design Choice of LaCAM}
\label{sec:eval-design}

Finally, we investigated the design choice of LaCAM implementation.
Specifically, we assessed two variants:
\begin{itemize}
\item\textbf{DFS} does not use the reinsert operation at the high-level.
\item\textbf{GREEDY} uses another configuration generator instead of PIBT, such that agents greedily determines the location according to the priority order.
  This is equivalent to prioritized planning with a single-step planning horizon.
\end{itemize}

\Cref{fig:result-design} shows the result in \mapname{random-32-32-20}.
There are two observations:
1)~The reinsert operation improves the SOC metric.
2)~The choice of configuration generator significantly affects the search as seen in that GREEDY failed in most settings.
Consequently, the excellent performance of LaCAM in \cref{fig:result-main} relies on promising successor generation at the high-level, which is done by PIBT.

\section{Related Work}
\label{sec:related-work}

The multi-agent pathfinding (MAPF) problem~\cite{stern2019def} has been extensively studied since the 2010s.
LaCAM borrows several ideas from prior art.
We here review the relationship between LaCAM and existing MAPF algorithms.

In general, MAPF algorithms are categorized into four:
1)~\emph{Search-based} approaches search solutions in a coupled manner among agents~\cite{standley2010finding,sharon2015conflict}.
2)~\emph{Compiling-based} approaches reduce MAPF to well-known problems such as SAT~\cite{surynek2019unifying,lam2022branch}.
3)~\emph{Prioritized planning} approaches sequentially plan individual paths for agents in a decoupled manner~\cite{erdmann1987multiple,silver2005cooperative}.
4)~\emph{Rule-based} approaches make agents move step-by-step following ad-hoc rules~\cite{luna2011push}.
LaCAM is search-based, while the configuration generation is not limited to.
Indeed, PIBT~\cite{okumura2022priority} used in the experiments is categorized into prioritized planning and rule-based approaches.

Popular MAPF algorithms use a two-level search~\cite{sharon2013increasing,sharon2015conflict,surynek2019unifying,lam2022branch}.
At the high-level, these algorithms search \emph{negative} constraints that specify which agents cannot use where and when, while at the low-level, they perform single-agent pathfinding following constraints.
LaCAM also relies on a two-level search, however, constraints are addressed at the low-level.
Furthermore, our implementation uses \emph{positive} constraints that specify who to be where.
Positive constraints are not new in the literature;
we can see an example in CBS~\cite{li2019disjoint}.
Note that LaCAM with negative constraints is possible to implement, however, the search space for the low-level can be dramatically larger than using positive ones.

Apart from two-level search, \astar variants for MAPF have been also developed~\cite{standley2010finding,goldenberg2014enhanced,wagner2015subdimensional}.
In these studies, successors of a search node are generated without \emph{aggressive} coordination between agents beyond collision checking.
In contrast, LaCAM considers aggressive coordination when generating successors at the high-level, which is incorporated by the use of other MAPF algorithms as a configuration generator.

The manner of adding constraints is partially influenced by \astar with operator decomposition~\cite{standley2010finding}.
This algorithm creates successor nodes that correspond to one action of one agent, instead of actions for the entire agents.
LaCAM adds constraints in a similar scheme.

\section{Conclusion and Discussion}
\label{sec:conclusion}

This paper introduced LaCAM, a novel, complete, and quick search-based MAPF algorithm.
Our exhaustive experiments reveal that LaCAM can solve various instances in a very short time, even with complicated, dense, and challenging scenarios that other state-of-the-art sub-optimal solvers cannot handle.
Furthermore, LaCAM is scalable; even with 10,000 agents, it solved instances in tens of seconds.
In the following, we list discussions and future directions.

\paragraph{Why is LaCAM quick?}
In general, the average branching factor largely determines the search effort.
A vanilla \astar for MAPF generates $O(\Delta^{|A|})$ configurations from one search node, which is intractable especially when $|A|$ is large.
In contrast, each high-level node of LaCAM initially generates only one successor (i.e., configuration), and if necessary, gradually generates other successors in a lazy manner.
This scheme virtually suppresses the branching factor of LaCAM.
If the generated successor is promising (i.e., closer toward the goal configuration from the original), it can dramatically reduce the number of node generations.
Promising successors can be quickly obtained by techniques of recent MAPF studies such as PIBT.
This is a trick of quickness in LaCAM.
Although virtually reducing the branching factor in \astar for MAPF has been proposed~\cite{standley2010finding,goldenberg2014enhanced,wagner2015subdimensional}, these studies never achieve the dramatic reductions as LaCAM.

\paragraph{Optimal LaCAM}
This paper aimed at developing a quick MAPF solver;
we retain the discussion of optimality.
Since LaCAM is search-based, we consider that it is possible to develop optimal LaCAM.
Indeed, using a breadth-first search style instead of a depth-first style can solve makespan-optimal MAPF, where makespan is the maximum traveling time of agents.
We are also interested in developing an \emph{asymptotically optimal} version of LaCAM, that is, eventually converging to optima.
This is a common approach in motion planning algorithms~\cite{karaman2011sampling}.
Since solving MAPF optimally is NP-hard~\cite{yu2013structure}, such asymptotically-optimal approaches are practical for massive instances.
In the MAPF literature, such approaches have already appeared for other search-based algorithms~\cite{standley2011complete,cohen2018anytime}.

\paragraph{Improvements of Technical Components}
We investigated the design choices of LaCAM in \cref{sec:eval-design}, while other components should be further explored, such as how to add effective constraints in the low-level search and how to design effective configuration generators.
Those improvements will overcome the current limitation as seen in \cref{sec:poor}.

\section*{Acknowledgments}
I am grateful to Jean-Marc Alkazzi and Fran\c{c}ois Bonnet for their comments on the initial manuscript.
This work was partly supported by JSPS KAKENHI Grant Number 20J23011 and JST ACT-X Grant Number JPMJAX22A1.
I thank the support of the Yoshida Scholarship Foundation.

\bibliography{ref}

\begin{thebibliography}{27}
\providecommand{\natexlab}[1]{#1}

\bibitem[{Cohen et~al.(2018)Cohen, Greco, Ma, Hern{\'a}ndez, Felner, Kumar, and
  Koenig}]{cohen2018anytime}
Cohen, L.; Greco, M.; Ma, H.; Hern{\'a}ndez, C.; Felner, A.; Kumar, T.~S.; and
  Koenig, S. 2018.
\newblock Anytime Focal Search with Applications.
\newblock In \emph{Proceedings of International Joint Conference on Artificial
  Intelligence (IJCAI)}.

\bibitem[{Erdmann and Lozano-Perez(1987)}]{erdmann1987multiple}
Erdmann, M.; and Lozano-Perez, T. 1987.
\newblock On multiple moving objects.
\newblock \emph{Algorithmica}.

\bibitem[{Goldenberg et~al.(2014)Goldenberg, Felner, Stern, Sharon, Sturtevant,
  Holte, and Schaeffer}]{goldenberg2014enhanced}
Goldenberg, M.; Felner, A.; Stern, R.; Sharon, G.; Sturtevant, N.; Holte, R.;
  and Schaeffer, J. 2014.
\newblock Enhanced partial expansion {A*}.
\newblock \emph{Journal of Artificial Intelligence Research (JAIR)}.

\bibitem[{Hart, Nilsson, and Raphael(1968)}]{hart1968formal}
Hart, P.~E.; Nilsson, N.~J.; and Raphael, B. 1968.
\newblock A formal basis for the heuristic determination of minimum cost paths.
\newblock \emph{IEEE transactions on Systems Science and Cybernetics}.

\bibitem[{Karaman and Frazzoli(2011)}]{karaman2011sampling}
Karaman, S.; and Frazzoli, E. 2011.
\newblock Sampling-based Algorithms for Optimal Motion Planning.
\newblock \emph{International Journal of Robotics Research (IJRR)}.

\bibitem[{Lam et~al.(2022)Lam, Le~Bodic, Harabor, and Stuckey}]{lam2022branch}
Lam, E.; Le~Bodic, P.; Harabor, D.; and Stuckey, P.~J. 2022.
\newblock Branch-and-cut-and-price for multi-agent path finding.
\newblock \emph{Computers \& Operations Research (COR)}.

\bibitem[{Li et~al.(2021{\natexlab{a}})Li, Chen, Harabor, Stuckey, and
  Koenig}]{li2021anytime}
Li, J.; Chen, Z.; Harabor, D.; Stuckey, P.; and Koenig, S. 2021{\natexlab{a}}.
\newblock Anytime multi-agent path finding via large neighborhood search.
\newblock In \emph{Proceedings of International Joint Conference on Artificial
  Intelligence (IJCAI)}.

\bibitem[{Li et~al.(2022)Li, Chen, Harabor, Stuckey, and Koenig}]{li2022mapf}
Li, J.; Chen, Z.; Harabor, D.; Stuckey, P.~J.; and Koenig, S. 2022.
\newblock MAPF-LNS2: Fast Repairing for Multi-Agent Path Finding via Large
  Neighborhood Search.
\newblock In \emph{Proceedings of AAAI Conference on Artificial Intelligence
  (AAAI)}.

\bibitem[{Li et~al.(2019)Li, Harabor, Stuckey, Felner, Ma, and
  Koenig}]{li2019disjoint}
Li, J.; Harabor, D.; Stuckey, P.~J.; Felner, A.; Ma, H.; and Koenig, S. 2019.
\newblock Disjoint splitting for multi-agent path finding with conflict-based
  search.
\newblock In \emph{Proceedings of International Conference on Automated
  Planning and Scheduling (ICAPS)}.

\bibitem[{Li et~al.(2021{\natexlab{b}})Li, Harabor, Stuckey, and
  Koenig}]{li2021pairwise}
Li, J.; Harabor, D.; Stuckey, P.~J.; and Koenig, S. 2021{\natexlab{b}}.
\newblock Pairwise Symmetry Reasoning for Multi-Agent Path Finding Search.
\newblock \emph{Artificial Intelligence (AIJ)}.

\bibitem[{Li, Ruml, and Koenig(2021)}]{li2021eecbs}
Li, J.; Ruml, W.; and Koenig, S. 2021.
\newblock EECBS: A Bounded-Suboptimal Search for Multi-Agent Path Finding.
\newblock In \emph{Proceedings of AAAI Conference on Artificial Intelligence
  (AAAI)}.

\bibitem[{Li et~al.(2021{\natexlab{c}})Li, Tinka, Kiesel, Durham, Kumar, and
  Koenig}]{li2021lifelong}
Li, J.; Tinka, A.; Kiesel, S.; Durham, J.~W.; Kumar, T.~S.; and Koenig, S.
  2021{\natexlab{c}}.
\newblock Lifelong multi-agent path finding in large-scale warehouses.
\newblock In \emph{Proceedings of AAAI Conference on Artificial Intelligence
  (AAAI)}.

\bibitem[{Luna and Bekris(2011)}]{luna2011push}
Luna, R.; and Bekris, K.~E. 2011.
\newblock Push and swap: Fast cooperative path-finding with completeness
  guarantees.
\newblock In \emph{Proceedings of International Joint Conference on Artificial
  Intelligence (IJCAI)}.

\bibitem[{Okumura et~al.(2022)Okumura, Machida, Défago, and
  Tamura}]{okumura2022priority}
Okumura, K.; Machida, M.; Défago, X.; and Tamura, Y. 2022.
\newblock Priority Inheritance with Backtracking for Iterative Multi-agent Path
  Finding.
\newblock \emph{Artificial Intelligence (AIJ)}.

\bibitem[{Okumura, Tamura, and Défago(2021)}]{okumura2021iterative}
Okumura, K.; Tamura, Y.; and Défago, X. 2021.
\newblock Iterative Refinement for Real-Time Multi-Robot Path Planning.
\newblock In \emph{Proceedings of IEEE/RSJ International Conference on
  Intelligent Robots and Systems (IROS)}.

\bibitem[{Sharon et~al.(2015)Sharon, Stern, Felner, and
  Sturtevant}]{sharon2015conflict}
Sharon, G.; Stern, R.; Felner, A.; and Sturtevant, N.~R. 2015.
\newblock Conflict-based search for optimal multi-agent pathfinding.
\newblock \emph{Artificial Intelligence (AIJ)}.

\bibitem[{Sharon et~al.(2013)Sharon, Stern, Goldenberg, and
  Felner}]{sharon2013increasing}
Sharon, G.; Stern, R.; Goldenberg, M.; and Felner, A. 2013.
\newblock The increasing cost tree search for optimal multi-agent pathfinding.
\newblock \emph{Artificial Intelligence (AIJ)}.

\bibitem[{Silver(2005)}]{silver2005cooperative}
Silver, D. 2005.
\newblock Cooperative Pathfinding.
\newblock In \emph{Proceedings of AAAI Conference on Artificial Intelligence
  and Interactive Digital Entertainment (AIIDE)}.

\bibitem[{Solis et~al.(2021)Solis, Motes, Sandstr{\"o}m, and
  Amato}]{solis2021representation}
Solis, I.; Motes, J.; Sandstr{\"o}m, R.; and Amato, N.~M. 2021.
\newblock Representation-Optimal Multi-Robot Motion Planning using
  Conflict-based Search.
\newblock \emph{IEEE Robotics and Automation Letters (RA-L)}.

\bibitem[{Standley and Korf(2011)}]{standley2011complete}
Standley, T.; and Korf, R. 2011.
\newblock Complete Algorithms for Cooperative Pathfinding Problems.
\newblock In \emph{Proceedings of International Joint Conference on Artificial
  Intelligence (IJCAI)}.

\bibitem[{Standley(2010)}]{standley2010finding}
Standley, T.~S. 2010.
\newblock Finding Optimal Solutions to Cooperative Pathfinding Problems.
\newblock In \emph{Proceedings of AAAI Conference on Artificial Intelligence
  (AAAI)}.

\bibitem[{Stern et~al.(2019)Stern, Sturtevant, Felner, Koenig, Ma, Walker, Li,
  Atzmon, Cohen, Kumar et~al.}]{stern2019def}
Stern, R.; Sturtevant, N.; Felner, A.; Koenig, S.; Ma, H.; Walker, T.; Li, J.;
  Atzmon, D.; Cohen, L.; Kumar, T.; et~al. 2019.
\newblock Multi-Agent Pathfinding: Definitions, Variants, and Benchmarks.
\newblock In \emph{Proceedings of Annual Symposium on Combinatorial Search
  (SOCS)}.

\bibitem[{Surynek(2019)}]{surynek2019unifying}
Surynek, P. 2019.
\newblock Unifying search-based and compilation-based approaches to multi-agent
  path finding through satisfiability modulo theories.
\newblock In \emph{Proceedings of International Joint Conference on Artificial
  Intelligence (IJCAI)}.

\bibitem[{Van Den~Berg and Overmars(2005)}]{van2005prioritized}
Van Den~Berg, J.~P.; and Overmars, M.~H. 2005.
\newblock Prioritized motion planning for multiple robots.
\newblock In \emph{Proceedings of IEEE/RSJ International Conference on
  Intelligent Robots and Systems (IROS)}.

\bibitem[{Wagner and Choset(2015)}]{wagner2015subdimensional}
Wagner, G.; and Choset, H. 2015.
\newblock Subdimensional expansion for multirobot path planning.
\newblock \emph{Artificial Intelligence (AIJ)}.

\bibitem[{Wurman, D'Andrea, and Mountz(2008)}]{wurman2008coordinating}
Wurman, P.~R.; D'Andrea, R.; and Mountz, M. 2008.
\newblock Coordinating hundreds of cooperative, autonomous vehicles in
  warehouses.
\newblock \emph{AI magazine}.

\bibitem[{Yu and LaValle(2013)}]{yu2013structure}
Yu, J.; and LaValle, S.~M. 2013.
\newblock Structure and Intractability of Optimal Multi-Robot Path Planning on
  Graphs.
\newblock In \emph{Proceedings of AAAI Conference on Artificial Intelligence
  (AAAI)}.

\end{thebibliography}

\end{document}